\newcommand{\remark}[1]{\textcolor{red}{\em #1\newline}}
\renewcommand{\epsilon}{\varepsilon}
\newtheorem{theorem}{Theorem}
\newtheorem{lemma}[theorem]{Lemma}
\newcommand{\ignore}[1]{}
\newcommand{\oneonegps}{(1+1)~GP*\xspace}
\newcommand{\oneonegpssingle}{(1+1)~GP*\nobreakdash-single\xspace}
\newcommand{\oneonegpsmulti}{(1+1)~GP*\nobreakdash-multi\xspace}
\newcommand{\Wlog}{W.\,l.\,o.\,g.\xspace}
\newcommand{\wrt}{w.\,r.\,t.\xspace}
\newcommand{\ie}{i.\,e.\xspace}
\newcommand{\eg}{e.\,g.\xspace}
\newcommand{\SORTING}{SORTING\xspace}
\newcommand{\HVLs}{HVL\nobreakdash-mutate'\xspace}
\begin{document}

\title{Computational Complexity Results for Genetic Programming and the Sorting Problem}

\author{
\hspace*{-0mm}\mbox{Markus Wagner and Frank Neumann}\\
    School of Computer Science\\
    University of Adelaide\\
    5005 Adelaide, Australia
}

\date{}

\maketitle
\begin{abstract}
Genetic Programming (GP) has found various applications. Understanding this type of algorithm from a theoretical point of view is a challenging task. The first results on the computational complexity of GP have been obtained for problems with isolated program semantics. With this paper, we push forward the computational complexity analysis of GP on a problem with dependent program semantics. We study the well-known sorting problem in this context and analyze rigorously how GP can deal with different measures of sortedness.

\noindent\emph{ACM Category:} F.2, Theory of Computation, Analysis of Algorithms and Problem Complexity.
\end{abstract}



\section{Introduction}\label{sec:introduction}

Genetic programming (GP) \cite{koza:book} has proven to be very successful in various fields such as symbolic regression, financial trading, medicine, biology and bioinformatics (see e.g. Poli et al.~\cite{poli08:fieldguide}).
Various approaches such as schema theory, markov chain analysis, and approaches to measure problem difficulty have been used to understand GP from a theoretical point of view~\cite{PoliVLM10}.

Poli et al.~\cite{PoliVLM10} state, ``we expect to see computational complexity techniques being used to model simpler GP systems, perhaps GP systems based on mutation and stochastic hill-climbing.'' Computational complexity analysis has significantly increased the theoretical understanding of evolutionary algorithms for discrete search spaces. Here, one considers simplified versions of such algorithms and analyzes them rigorously on certain classes of problems by treating them as classical randomized algorithms~\cite{MotwaniRaghavan}. Taking this point of view, it allows one to use a sophisticated pool of techniques and to treat the algorithms in a strict mathematical sense.
Initial results on the computational complexity of evolutionary algorithms have been obtained for artificial pseudo-Boolean functions~\cite{RudBook,DJWoneone}. These results constitute the foundations for later results on classical combinatorial optimization, among them some of the most prominent problems in computer science such as minimum spanning trees, shortest paths, and maximum matchings (see Neumann and Witt~\cite{BookNeuWit} for an overview). 

Recently, the first computational complexity results for GP have been obtained by Durrett et al~\cite{GPOrderMajority2011}. In this paper, the authors consider simple GP algorithms on problems called ORDER and MAJORITY introduced by Goldberg and O'Reilly~\cite{goldberg:1998:good}. These two problems model isolated problem semantics and the analysis constitutes a first step towards obtaining deeper computational complexity results for GP.

Problems with isolated problem semantics are in a sense easy as they allow one to treat subproblems independently. The next step would be to consider problems that have dependent problem semantics and we follow this path in this paper.
Our goal is to push forward the computational complexity analysis of GP by examining a problem with dependent problem semantics, namely the sorting problem. Sorting problem is one of the most basic problems in computer science. It is also the first combinatorial optimization problem for which computational complexity results have been obtained in the area of discrete evolutionary algorithms~\cite{EASorting2004,EADirected2008}. In~\cite{EASorting2004}, sorting is treated as an optimization problem where the task is to minimize the unsortness of a given permutation of the input elements. To measure unsortness, different fitness functions have been introduced and studied with respect to the difficulty of being optimized by permutation-based evolutionary algorithms.

We consider the simple GP algorithms set up in \cite{GPOrderMajority2011} and analyze them on the different fitness functions of the sorting problem proposed in~\cite{EASorting2004}. 
Our analyses point out how GP algorithms can deal with this problem that has dependent problem semantics and provide rigorous insights into the optimization process of our GP systems.
As classical GP systems work on tree-based structures and allow many different solutions to a given problem, our investigations have to be significantly different from the ones carried out in \cite{EASorting2004}. One crucial difference is that elements may occur more than once in a tree. This leads for some of the fitness functions to local optima and prevents our GP algorithms from obtaining an optimal solution in expected polynomial time.

The outline of the paper is as follows. In Section~\ref{sec:definitions}, we introduce the algorithms that are subject to our analysis and present our model of the sorting problem. Section~\ref{sec:lowerBounds} presents lower bounds on the expected optimization time, and Section~\ref{sec:upper} presents upper bounds for sortedness measures that lead to an efficient optimization process.
Worst case situations and lower bounds are presented for sortedness measures which may make the algorithms getting stuck in Section~\ref{sec:worst}. Finally, we finish with some concluding remarks.

\section{Definitions}\label{sec:definitions}

%

\subsection{Program Initialization}

When considering tree-based genetic programming, a set of primitives $A$ has to be selected, where $A$ contains a set $F$ of functions and a set $L$ of terminals. 
The semantics of each primitive is explicitly defined. For example, a primitive might represent the value bound to an input variable, an arithmetic operation, or a branching statement such as an IF-THEN-ELSE conditional. 
Functions are parameterized, and terminals are either functions with no parameters, i.e.~arity equal to zero, or input variables to the program that serve as actual parameters to the formal parameters of functions.

For our investigations, we assume that a GP program is initialized in the following way: the root node is randomly drawn from $A$, and subsequently, the parameters of each function are recursively populated with random samples from $A$, until the leaves of the tree are all terminals. Thus, functions constitute the internal nodes of the parse tree, and terminals occupy the leaf nodes. 

\subsection{HVL-mutate'}

The \HVLs operator is an update of the HVL mutation operator (\cite{OReilly:thesis}) and is motivated by minimality. The original HLV first selects a node at random in a copy of the current parse tree. Let us term this the \texttt{currentNode}. It then, with equiprobability, applies one of three sub-operations: insertion, substitution, or deletion. Insertion takes place above \texttt{currentNode}: a randomly drawn function from $F$ becomes the parent of \texttt{currentNode} and its additional parameters are set by drawing randomly from $L$. Substitution changes \texttt{currentNode} to a randomly drawn function of $F$ with the same arity. Deletion replaces \texttt{currentNode} with its largest child subtree, which often admits large deletion sub-operations.

The variation of HLV that we consider here functions slightly differently, since we restrict it to operate on trees where all functions take two parameters. Rather than choosing a node followed by an operation, we first choose one of the three sub-operations to perform. Then, the operations proceed as shown in Figure~\ref{fig:hvl_prime_example}. Insertion and substitution are exactly as in HVL; however, deletion only deletes a leaf and its parent to avoid the potentially macroscopic deletion change of HVL that is not in the spirit of bit-flip mutation. This change makes the algorithm more amenable to complexity analysis and specifies an operator that is only as general as our simplified problems require, contrasting with the generality of HVL, where all sub-operations handle primitives of any arity. Nevertheless, both operators respect the nature of GP's search among variable-length candidate solutions because each generates another candidate of potentially different size, structure, and composition.


\begin{figure}[htb]
\centering
\subfigure[Before insertion]
  {\includegraphics[width=1.3in,height=0.95in,trim=10mm 15mm 0mm 0mm]{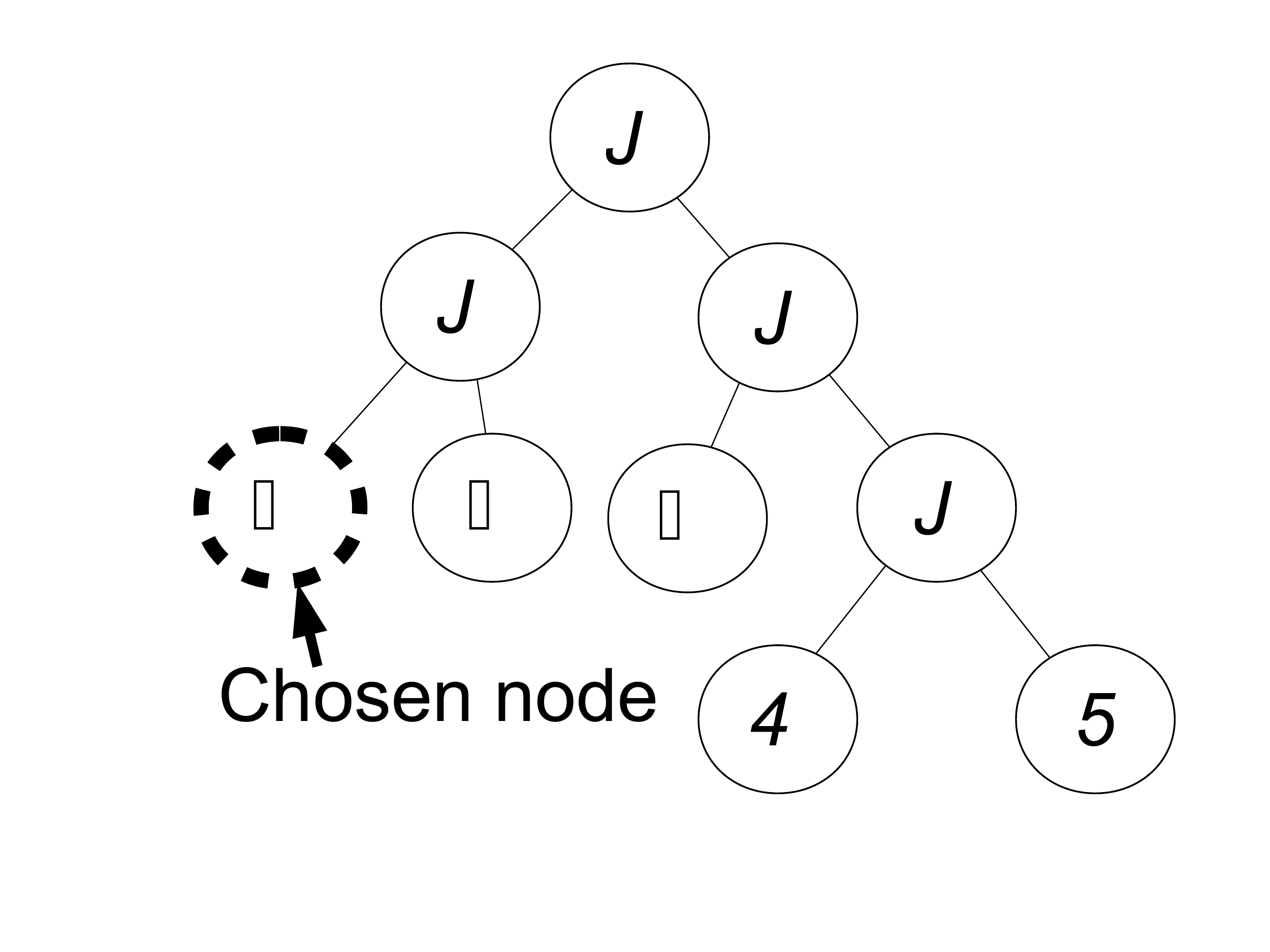}}
\hspace{0.2in}
\subfigure[After insertion]
  {\includegraphics[width=1.4in,height=0.95in,trim=10mm 15mm 0mm 0mm]{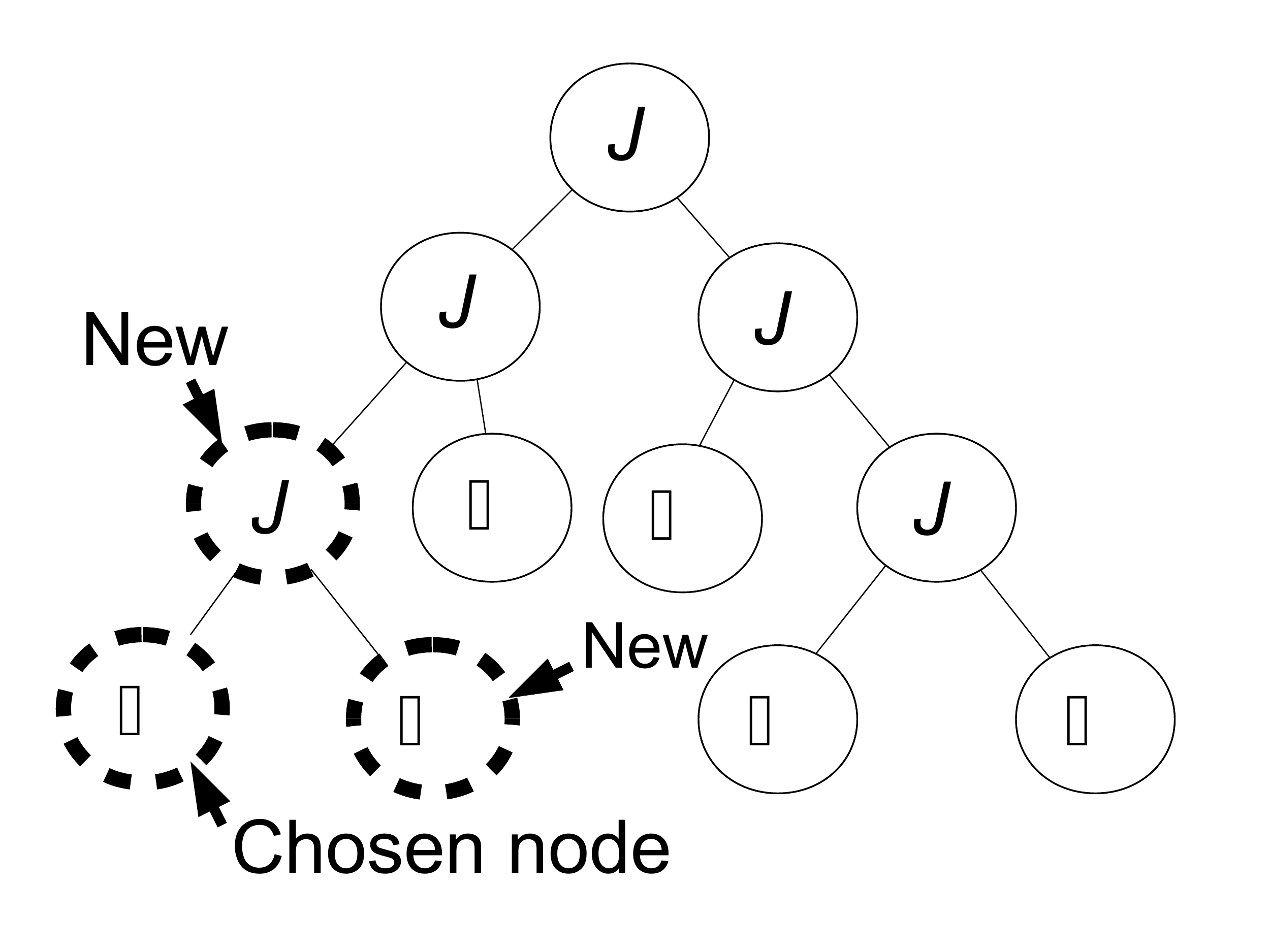}}
\subfigure[Before deletion]
  {\includegraphics[width=1.3in,height=0.95in,trim=10mm 15mm 0mm 0mm]{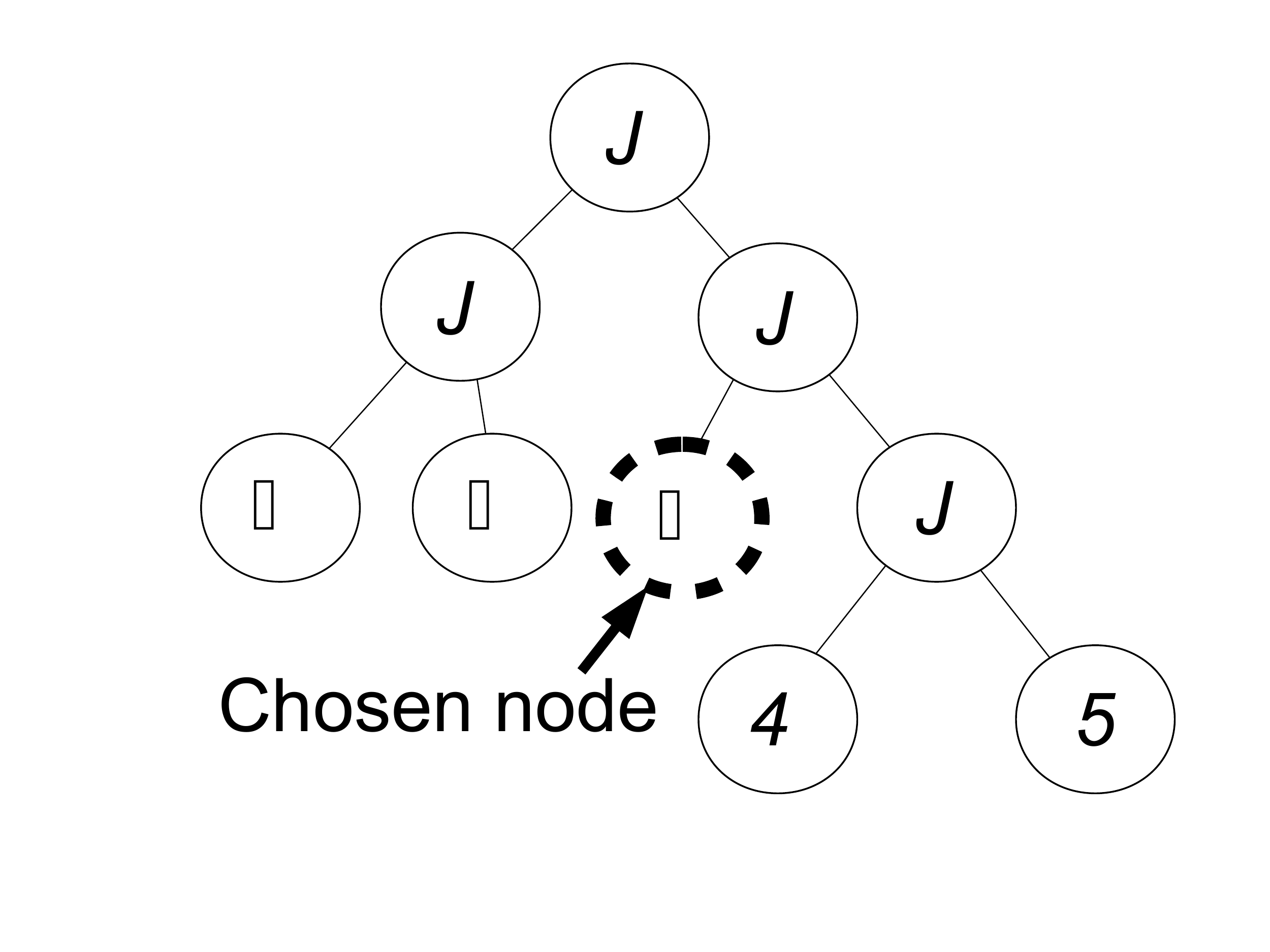}}
\hspace{0.2in}
\subfigure[After deletion]
  {\includegraphics[width=1.4in,height=0.95in,trim=10mm 30mm 0mm 0mm]{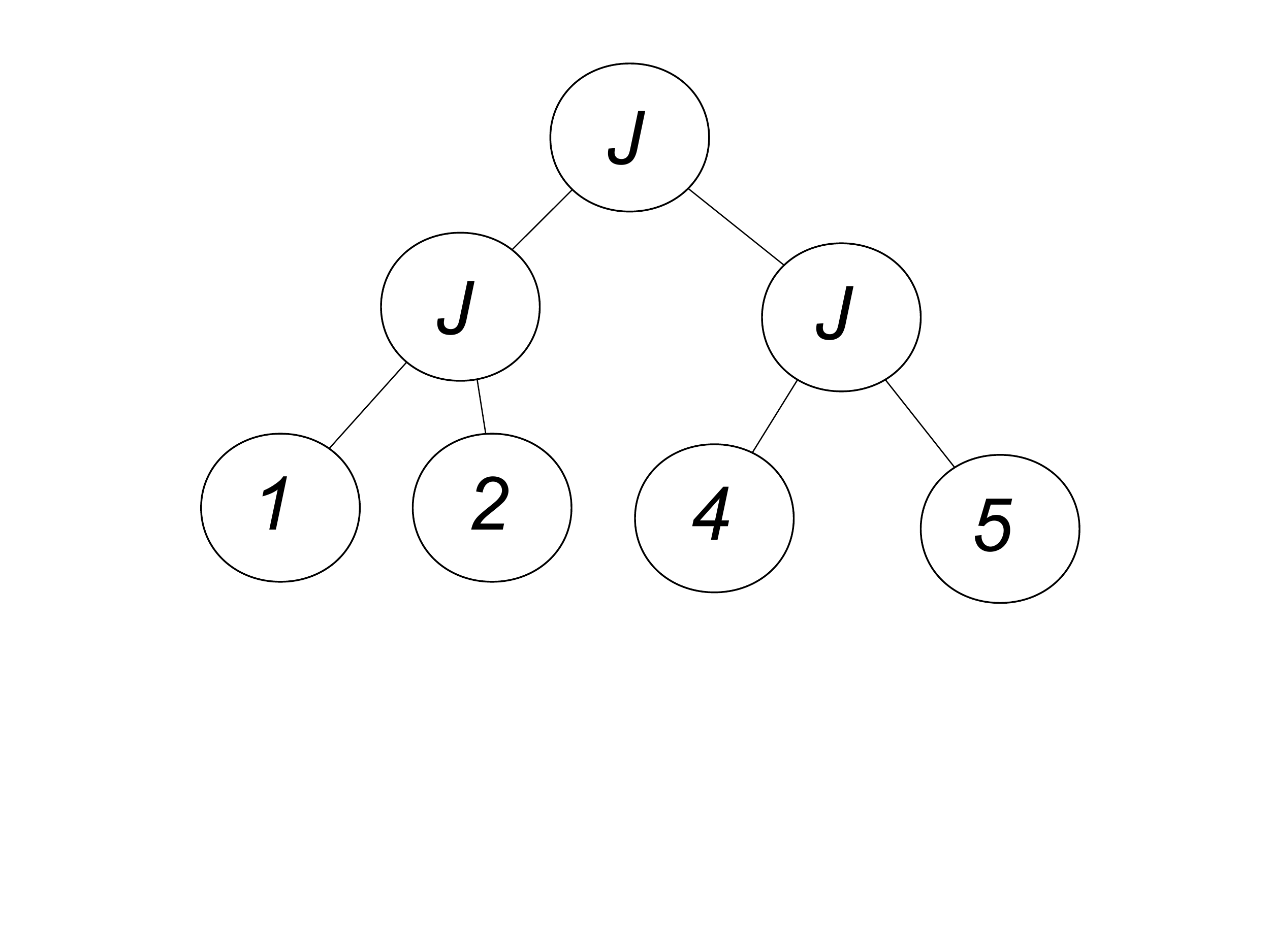}}
\subfigure[Before substitution]
  {\includegraphics[width=1.3in,height=0.95in,trim=10mm 15mm 0mm 0mm]{current-hvl-mutate-prime-2.pdf}}
\hspace{0.2in}
\subfigure[After substitution]
  {\includegraphics[width=1.3in,height=0.95in,trim=10mm 15mm 0mm 0mm]{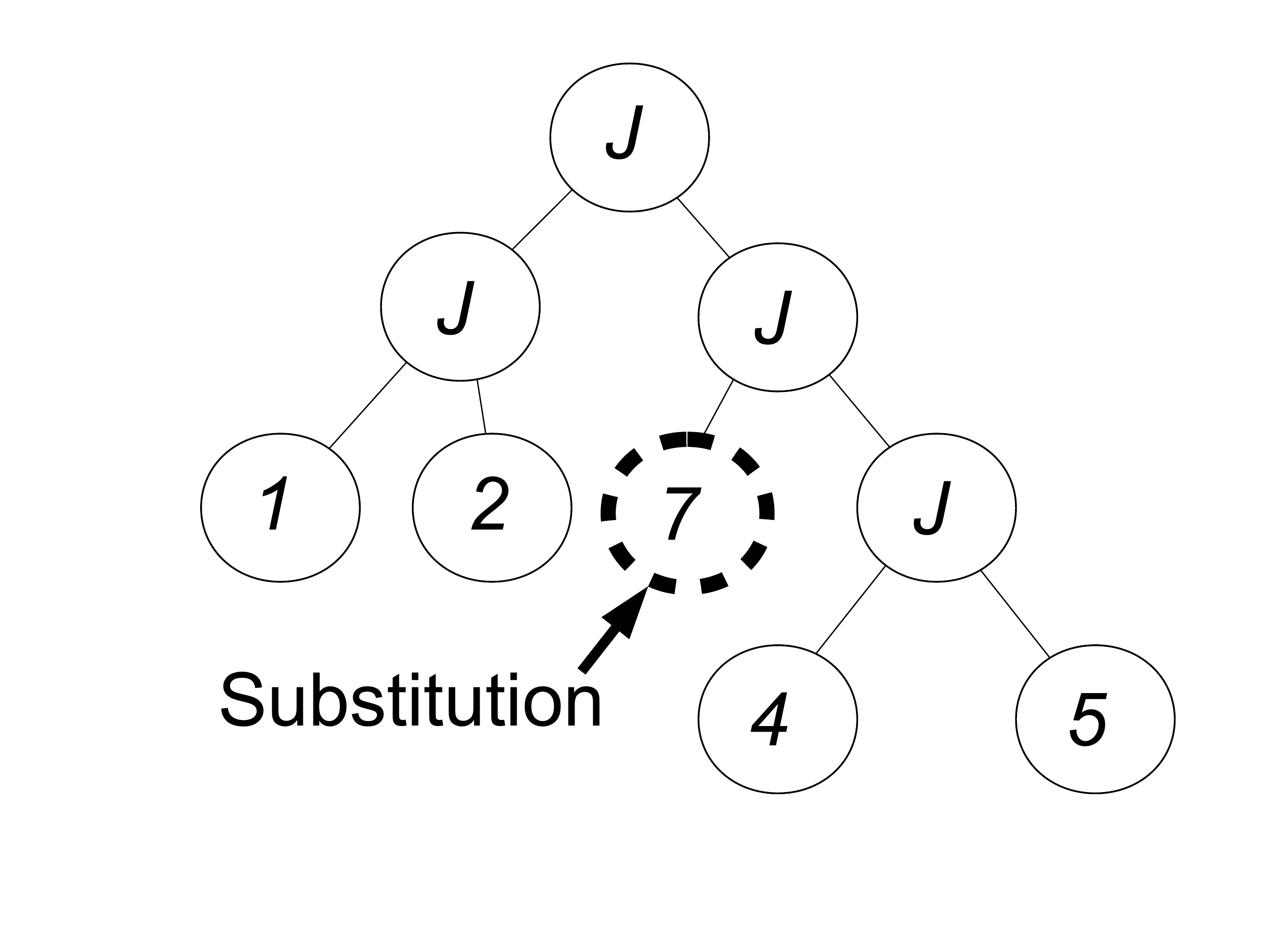}}
\caption{Example of the operators from \HVLs.}
\label{fig:hvl_prime_example}
\end{figure}

\subsection{Algorithms}\label{sec:algGP}

We define the genetic programming variant called\linebreak \oneonegps. It works with a population of size one and produces in each iteration one single offspring. \oneonegps is defined in Algorithm~\ref{gp} and accepts an offspring only if it is strictly fitter than its parent.

\begin{algorithm}[h!]
\caption{\oneonegps}
\begin{enumerate}
\item Choose an initial solution $X$.
\item Set $X':= X$.
\item Mutate $X'$ by applying \HVLs $k$ times. For each application, randomly choose to either substitute, insert, or delete.
\begin{itemize}
\item If substitute, replace a randomly chosen leaf of $X'$ with a new leaf $u \in L$ selected uniformly at random.
\item If insert, randomly choose a node $v$ in $X'$ and select $u \in L$ uniformly at random. Replace $v$ with a join node whose children are $u$ and $v$, with the order of the children chosen randomly.
\item If delete, randomly choose a leaf node $v$ of $X'$, with parent $p$ and sibling $u$. Replace $p$ with $u$ and delete $p$ and $v$.
\end{itemize}
\item If $f(X') > f(X)$, set $X:=X'$.
\item Go to 2.
\end{enumerate}
\label{gp}
\end{algorithm}

Additionally, we consider a variant of \oneonegps which potentially applies \HVLs more then once when a child is generated. 
Thus, for\linebreak\oneonegpssingle, we set the number of applications to $k=1$, so that we perform one mutation at a time according to the \HVLs framework, and for \oneonegpsmulti, we choose $k = 1+\text{Pois}(1)$, so that the number of mutations at a time varies randomly according to the Poisson distribution.

We will analyze these two algorithms in terms of the expected number of fitness evaluations that is needed to produce an optimal solution for the first time. This is called the expected optimization time of the algorithm.

\subsection{The \SORTING Problem}

Given a set of $n$ elements from a totally ordered set, sorting is the problem of ordering these elements. We will identify the given elements by $1, \ldots, n$.

The goal is to find a permutation $\pi_{opt}$ of $1, \ldots, n$ such that 
\[
\pi_{opt}(1) < \pi_{opt}(2) <  \ldots < \pi_{opt}(n)
\]
holds, where $<$ is the order on the totally ordered set. \Wlog we assume $\pi_{opt}=id$, \ie $\pi_{opt}(i)=i$ for all $i$, throughout this paper.

The set of all permutations $\pi$ of $1, \ldots, n$ forms a search space that has already been investigated in \cite{EASorting2004} for the analysis of permutation-based evolutionary algorithms.  The authors of this paper, investigate sorting as an optimization problem whose goal is to maximize the sortedness of a given permutation. 
The following  fitness functions measuring the sortedness of a given permutation $\pi$  have been analyzed in \cite{EASorting2004} .

\begin{itemize}
	\item $INV(\pi)$, measuring the number of pairs in correct order,\footnote{Originally, $INV$ measures the numbers of pairs in wrong order. Our interpretation has the advantage that we need no special treatment of incompletely defined permutations.} which is the number of pairs $(i,j)$, $1 \leq i < j \leq n$, such that $\pi(i) < \pi(j)$,
	\item $HAM(\pi)$, measuring the number of elements at correct position, which is the number indices $i$ such that $\pi(i)=i$,
	\item $RUN(\pi)$, measuring the number of maximal sorted blocks, which is the number of indices $i$ such that $\pi(i+1) < \pi(i)$ plus one,
	\item $LAS(\pi)$, measuring the length of the longest ascending subsequence, which is the largest $k$ such that $\pi(i_1) < \ldots < \pi(i_k)$ for some $i_1 < \ldots < i_k$,
	\item $EXC(\pi)$, measuring the minimal number of pairwise exchanges in $\pi$, in order to sort the sequence.
\end{itemize}

Note that $EXC(\pi)$ can be computed in linear time, based on the cycle structure of permutations. If the sequence is sorted, it has $n$ cycles. Otherwise, it is always possible to increase the number of cycles by exchanging an element that is not sitting at its correct position with the element that is currently sitting there. For any given permutation $\pi$ consisting of $n-k$ cycles, $EXC(\pi)=k$.


We want to investigate sorting in the context of genetic programming. Note, that the fitness functions encounter several interactions between the elements of the permutation. Initial investigations on the computational complexity analysis of genetic programming considered isolated problem semantics~\cite{GPOrderMajority2011} and an important step is to investigate what happens if dependencies are involved. Therefore, the sorting problem modeled as an optimization problem seems to be ideal to get further rigorous insight into the optimization behavior of genetic programming.

Considering tree-based genetic programming, we have to deal with the fact that certain elements are not present in a current tree. 
We extend our notation of permutation to incompletely defined permutations.
Therefore, we use $\pi$ to denote a list of elements, where each element of the input set occurs at most once. 
This is a permutation of the elements that occur in the tree.
Furthermore, we use $\pi(x)=p$ to get the position $p$ that the element $x$ has within $\pi$.
In the case that $x \notin \pi$, $\pi(x)=\bot$ holds. 
We adjust the definition of $\pi$ to later accommodate the use of trees as the underlying data structure. 
For example, $\pi=(1,2,4,6,3)$ leads to $\pi(1)=1$, $\pi(2)=2$, $\pi(3)=5$, $\pi(4)=3$, $\pi(6)=4$, and $\pi(5)=\bot$.

\ignore{
To guide the search process, we have to specify an appropriate fitness function as the measure for sortedness. A large number of such measures exists and many have been subject to studies (for example \cite{sortednessStudiesPetersson95}). 
We will consider the following fitness functions introduced in \cite{EASorting2004} which measure sortness in different ways.
}

In order to deal with incompletely defined permutations, we need to complete the measures that are to be minimized, namely $RUN$ and $EXC$. We assign a fitness of $n+1$ to incompletely defined permutations. 

\ignore{
\remark{[comment for the future] quite strong, but it does not matter for the non-*-variants\\
An alternative: minimize $-|\pi| + RUN(\pi)$ and $-|\pi| + EXC(\pi)$ to prevent large plateaus}
}

The set of primitives used in our GP-variants is the union of the following two sets:

\begin{itemize}
	\item $F:=\{J\}$, $J$ has arity 2,
	\item $L:=\{1, \ldots, n\}$.
\end{itemize}

Algorithm~\ref{alg:sorting} describes how the fitness of a tree is computed.

\begin{algorithm}[h!]
\caption{Derivation of $f(X)$ for \SORTING}\label{alg:sorting}
\begin{enumerate}
\item Derive a possibly incompletely defined permutation $P$ of X:
\begin{enumerate}
\item[Init: ]$l$ an empty leaf list, $P$ an empty list representing a possibly incompletely defined permutation
\item[1.1] Parse $X$ in order and insert each leaf at the rear of $l$ as it is visited.
\item[1.2] Generate $P$ by parsing $l$ front to rear and adding (``expressing'') a leaf to $P$ only if it is not yet in $P$, \ie it has not yet been expressed.
\end{enumerate}
\item Compute $f(X)$ based on $P$ and the chosen fitness function.
\end{enumerate}
\end{algorithm}

For example, for a tree $X$ with (after the in order parse) $l=(2,2,3,4,5,1,6,3)$ and $\left|L\right|=6$, $P=(2,3,4,5,1,6)$, the sortedness results are $INV(P)=10$, $HAM(P)=1$, $RUN(P)=2$, $LAS(P)=4$, and $EXC(P)=4$.


\section{General Lower Bounds}\label{sec:lowerBounds}

We start with a simple lower bound, that is independent of the used sortedness measure.

\begin{theorem}\label{thm:lowerBound}
Starting with a non-optimal solution, the expected optimization time of the single- and multi-operation cases of \oneonegps on \SORTING is $\Omega(n^2)$ if the deletion of nodes is not allowed, and $\Omega(n)$ else, where $n=\left|L\right|$. 
\end{theorem}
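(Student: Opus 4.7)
The plan is to prove a waiting-time style lower bound: bound the single-step probability of jumping from any reachable non-optimal state into the set of optimal trees by some $p$, then conclude $\E{T} \geq 1/p$. Two structural observations underlie the argument.

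First, any optimal tree must express the permutation $(1,\ldots,n)$, so it contains every element of $L$ as a leaf and therefore has at least $n$ leaves. Since each \HVLs application changes the leaf count by at most one, a single-operation step from a state $X$ with fewer than $n-1$ leaves cannot produce an optimum, so I may restrict attention to states $X$ with at least $n-1$ leaves (and hence $\Omega(n)$ nodes); for the multi-operation variant, the probability of $k = 1+\text{Pois}(1)$ being large enough to bridge a larger leaf-count gap decays super-polynomially and so is negligible.

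Second, a single \HVLs application from $X$ is specified by an operation type, a leaf or node chosen with probability $O(1/|X|)$, a value in $L$ chosen with probability $1/n$, and possibly an order bit, so any given target tree is produced with probability $O(1/(n\,|X|))$. The key step, and the main obstacle, is to bound by $O(1)$ the number of distinct one-step \HVLs outcomes from $X$ that are optimal. The intuition is that the expressed permutation must become exactly $(1,\ldots,n)$ via a single leaf-value flip or leaf addition, so the chosen value is forced to repair a specific defect (a missing element, or an element whose first occurrence is misplaced) and the chosen position is constrained to a constant-size set of in-order leaf positions. I would formalize this by a case analysis on operation type and on the defect structure of $X$'s expressed permutation; the trickiest case should be substitution when $X$ has many duplicated leaves, because duplicates multiply the number of positions that could realize the required first-occurrence shift. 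Combining this $O(1)$ count with the per-outcome probability yields a one-step success probability of $O(1/n^2)$ when deletion is forbidden (since then $|X| = \Omega(n)$), and $\E{T} = \Omega(n^2)$ follows by the waiting-time bound. For the multi-operation variant, I would condition on $k$: the same bound goes through because each additional \HVLs application after the first only dilutes the probability of hitting the optimum at the end of the step.

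With deletion allowed, $|X|$ may be much smaller than $n$, so the factor $O(1/|X|)$ is no longer useful. However, for a step that ends at an optimum using only deletions, the starting $X$ must still have at least $n+1$ leaves, so $|X| = \Omega(n)$ in that subcase; and any step ending in an optimum that uses at least one insertion or substitution still involves a value choice contributing the factor $1/n$. Together with the $O(1)$ count of successful outcomes, this yields a single-step success probability of $O(1/n)$ uniformly over reachable non-optimal $X$, and $\E{T} = \Omega(n)$ follows.
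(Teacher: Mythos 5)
Your overall route is the same as the paper's: lower-bound the optimization time by the waiting time for the final, optimum-producing \HVLs application, using (i) that the tree at that point has $\Omega(n)$ nodes and (ii) a bound on how many single mutations can produce an optimum, yielding a per-step success probability of $O(1/n^2)$ without deletion and $O(1/n)$ with deletion. The paper does exactly this, with your ``key step'' isolated as Lemma~\ref{lem:lowerBoundImprovement} and argued by a case analysis of near-optimal leaf lists. The genuine gap in your proposal is that this key step is only announced (``I would formalize this by a case analysis''), and, worse, the statement you intend to prove --- that only $O(1)$ distinct one-step \HVLs outcomes from a non-optimal $X$ are optimal --- is not true uniformly over non-optimal trees, so the $\Omega(n^2)$ half cannot be completed along the path you sketch. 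Two concrete obstructions: (a) substitution with duplicates, exactly the case you flag as trickiest: if the leaf list is $(1,2,\dots,k-1)$ followed by $m$ further copies of $k-1$ and then $(k+1,\dots,n)$, then substituting \emph{any} of the $m$ non-first copies of $k-1$ by $k$ makes the expressed permutation the identity, so with $m=\Theta(n)$ there are $\Theta(n)$ successful substitutions and the one-step success probability is $\Theta(1/n)$ although no deletion is involved; (b) insertion positions are tree nodes, not leaf-list slots: placing the missing element $1$ at the front of the leaf list can be realized by choosing as $v$ any node on the leftmost root-to-leaf path (with $u=1$ as left child), and for a caterpillar-shaped tree with leaf list $(2,3,\dots,n)$ this path contains $\Theta(n)$ nodes, again giving success probability $\Theta(1/n)$ using insertions only. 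Hence the uniform bound ``$O(1)$ optimal outcomes, each of probability $O(1/(n\,|X|))$'' that your $\Omega(n^2)$ claim rests on fails; only the $\Omega(n)$ conclusion survives your argument as written, since it needs merely that any successful insertion or substitution must pick one essentially forced label (a $1/n$ factor) and that a purely deleting final step starts from at least $n+1$ leaves.

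Two further remarks. Your handling of the multi-operation variant (``each additional application only dilutes the probability'') is asserted rather than argued; the paper instead bounds the number of sub-operations by $\sqrt{n}$ up to probability $e^{-\Omega(\sqrt{n})}$ and applies the single-step estimate to the last sub-operation, and your version would need an analogous conditioning on the state just before that last sub-operation. Finally, in fairness, the paper's Lemma~\ref{lem:lowerBoundImprovement} counts candidate repairs at the level of leaf-list positions and particular near-optimal forms and therefore glosses over both multiplicities above; but whereas the paper commits to an explicit case analysis, your outline leaves precisely this contentious counting step open, and in the generality in which you state it the $O(1)$ bound is simply false, which is a gap and not merely an omitted routine verification.
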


In order to prove this, we have to bound the number of different mutations that can lead to an optimal tree. This is done in the following lemma.

\begin{lemma}\label{lem:lowerBoundImprovement}
For any given non-optimal tree $X$ and its in order parsed list of leaves $l$, there exist at most three different sub-operations of \HVLs that can change $X$ into an optimal tree.
\end{lemma}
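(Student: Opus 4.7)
My plan is to treat the three sub-operation types of \HVLs (substitution, insertion, and deletion) separately and show that each admits at most one mutation capable of transforming $X$ into an optimal tree, so that in total at most three useful sub-operations exist.

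Let $l=(l_1,\ldots,l_m)$ be the in-order leaf list of $X$, and let $P$ be the corresponding expressed permutation. Since $X$ is non-optimal, $P \neq (1,2,\ldots,n)$, so there is a smallest index $j$ at which the sequence of first-occurring values in $l$ departs from $1,2,3,\ldots$. The structural observation driving the argument is that any single sub-operation producing an optimal tree must simultaneously preserve the already-correct prefix $1,2,\ldots,j-1$ of first occurrences and repair the deviation at position $j$. For substitution, this forces the mutated leaf to lie strictly after the first occurrences of $1,\ldots,j-1$ (otherwise the correct prefix is destroyed), and then the chosen position together with the replacement value is constrained so that the next first-occurring value becomes $j$ and the remainder of the expressed permutation realizes $j+1,\ldots,n$. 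For insertion, the inserted value must be exactly $j$, and the insertion site must lie immediately before the current first occurrence of the value currently in slot $j$ of $P$. For deletion, the deleted leaf must be the unique leaf whose removal reveals the correct continuation of the first-occurrence sequence; removing any leaf participating in one of the first occurrences of $1,\ldots,j-1$ would destroy the prefix and cannot yield an optimal tree.

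The main obstacle is the substitution case. A single substitution simultaneously removes the substituted leaf's value from the expressed permutation (when that leaf was the unique occurrence of its value in $l$) and either introduces a new value into the permutation or merely shifts another value's first occurrence (depending on whether the replacement value appears elsewhere in $l$ and where). Different replacement values can therefore produce the same corrected first-occurrence sequence through genuinely different mechanisms. Carefully distinguishing these cases—based on whether the substituted leaf is the unique occurrence of its value in $l$, and whether the replacement value is new to $l$ or already present before or after the substituted position—is the technical heart of the argument, and it is the step where one must rule out the possibility of multiple distinct useful substitutions and finally pin the useful substitution down to a single mutation.
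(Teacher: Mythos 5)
Your plan hinges on the claim that each of the three sub-operation types admits \emph{at most one} mutation producing an optimal tree, so that the total is $1+1+1=3$. That claim is false for substitution, and this is not a corner case but exactly the situation the paper's proof (and the footnote in the proof of Theorem~\ref{thm:lowerBound}, which budgets a factor $2$ for substitutions) is built around: if an element $i$ with $1<i<n$ is missing and its neighbours occur with multiplicity, e.g.\ $n=3$ and $l=(1,1,3,3)$, then substituting the rightmost $1$ by $2$ gives $P=(1,2,3)$ and substituting the leftmost $3$ by $2$ also gives $P=(1,2,3)$ --- two genuinely different optimal-producing substitutions. So the final step you describe, ``pin the useful substitution down to a single mutation,'' cannot be carried out, and with two substitutions possible your decomposition no longer yields the bound of three unless you additionally show that insertion and deletion are never \emph{both} available together with two substitutions. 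That exclusivity is the actual content of the paper's argument: it is a case analysis on the defect of the near-optimal tree --- either an element is missing from $l$ (then no deletion can create it, and the useful mutations are one insertion plus at most two substitutions), or every element occurs but a misplaced leaf blocks the expression of the correct permutation (then the useful mutations are one deletion plus at most two substitutions). Your proposal never makes this dichotomy, and it also leaves the substitution analysis, which you yourself identify as the technical heart, as a program rather than a proof; as stated, the argument would fail at precisely that point.
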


\begin{proof}
The proof is done by investigating the different cases of near-optimal individuals that can be improved to the optimal one in a single mutation. In the following, 
we denote by $x\_\_x$ a sequence of leaves labeled $x$.
\begin{enumerate}
\item[Case 1] An element $i \in L$ is missing in $l$. 
\begin{itemize}
	\item If $i=1$ and $l=(2\_\_2, \ldots, n)$, then an insertion of $1$ at position $1$ results in an optimal tree. Furthermore, a substitution of $2$ to $1$ results in another optimal tree.
	\item If $1<i<n$ and $l=( \ldots, \mbox{$i$-1}\_\_\mbox{$i$-1}, \mbox{$i$+1}\_\_\mbox{$i$+1}, \ldots)$, then an insertion of $i$ between the \mbox{$i$-1}'s and the \mbox{$i$+1}'s results in an optimal tree. Alternatively, substitutions of the rightmost \mbox{$i$-1} or of the leftmost \mbox{$i$+1} yield further optimums.
	\item If $i=n$ and $l=(\ldots, \mbox{$n$-1}\_\_\mbox{$n$-1}, n\_\_n)$, then an insertion of $n$ at the rightmost position, or a substitution of the rightmost \mbox{$n$-1} yield optimal trees.
\end{itemize}
\item[Case 2] An element $x \in L$ is at an incorrect position $p$ in $l$, thus possibly preventing other $x$ in the rest of the list from becoming expressed.
\begin{itemize}
	\item If $p=1$ and $l=(x, 1\_\_1, \ldots)$, it is possible to delete $i$, or to substitute $i$ by a leaf labeled 1, resulting in optimal trees. 
	\item If $1<x<=n$ and $l=(\ldots, \mbox{$i$-1}\_\_\mbox{$i$-1}, x, i\_\_i, \ldots)$, then it is possible to delete $x$, or to substitute $x$ by \mbox{$i-1$} or by $i$.
\end{itemize}
\end{enumerate}
\end{proof}

Note that the investigated individuals represent maximal cases \wrt the number of possible optimizing mutations. For example, for a tree $X$ with  $l=(2,3,\ldots)$, an exchange of the $2$ to a $1$ would obviously not yield an optimal tree.

Now, it is possible to prove Theorem~\ref{thm:lowerBound}.

\begin{proof}[Proof of Theorem~\ref{thm:lowerBound}]
We investigate the final step producing the optimal individual. 
There, it is necessary, that the last application of an \HVLs sub-operation produces the optimal individual. 
For the single-mutation variant, the tree size at this stage is at least $n-1=\Omega(n)$. For the multi-mutation variant, the size is at least $n-\sqrt{n}=\Omega(n)$ with high probability, as the probability to perform more than $\sqrt{n}$ operations is $e^{-\Omega(\sqrt{n})}$.

Based on Lemma~\ref{lem:lowerBoundImprovement}, for each non-optimal individual, there are at most a total of three sub-operations to change it into the optimal one. 
For the sub-operation insertion, the probability of success, \ie for inserting the needed terminal at the correct position, is bounded above by $\frac{1}{3} \frac{1}{n \Omega(n)}$. 
Similarly, the success probability for substitution\footnote{Note that in some cases, two different substitutions may result in the optimal solution.} is bounded above by $\frac{1}{3} \frac{2}{n \Omega(n)}$, and for a deletion by $\frac{1}{3} \frac{1}{\Omega(n)}$. 
Hence, the probability of a success is bounded above by 
$$\frac{1}{n \Omega(n)}+\frac{2}{n \Omega(n)}=O\left(\frac{1}{n^2}\right)$$
 if no deletion of nodes is allowed, and by 
$$\frac{1}{n \Omega(n)}+\frac{2}{n \Omega(n)}+\frac{1}{\Omega(n)}=O\left(\frac{1}{n}\right)$$ 
 else. Thus the waiting times for single sub-operations are bounded from below by $\Omega(n^2)$ and $\Omega(n)$.
\end{proof}

\ignore{\begin{proof}
The probability of starting with the optimal individual is $\Omega(1/(T_{max}!))$. Otherwise, we investigate the final step producing the optimal individual. 
There, it is necessary, that the last application of an \HVLs sub-operation produces the optimal individual. 
Based on Lemma~\ref{lem:lowerBoundImprovement}, for each non-optimal individual, there are at most a total of three sub-operations to change it into the optimal one. 
For the sub-operation \texttt{insertion}, the probability of success, \ie for inserting the needed terminal at the correct position, is bounded above by $\frac{1}{3} \frac{1}{n T_{max}}$. 
Similarly, the success probability for \texttt{substitution}\footnote{Note that in some cases, two different substitutions may result in the optimal solution.} is bounded above by $\frac{1}{3} \frac{2}{n T_{max}}$, and for \texttt{deletion} by $\frac{1}{3} \frac{1}{T_{max}}$. 
Hence, the probability of a success is bounded above by $\frac{1}{n T_{max}}+\frac{2}{n T_{max}}=\frac{1}{O(n T_{max})}$ if no deletion of nodes is allowed, and by $\frac{1}{n T_{max}}+\frac{2}{n T_{max}}+\frac{1}{T_{max}}=\frac{1}{O(T_{max})}$ else. Thus the waiting times for single sub-operations are bounded from below by $\Omega(n T_{max})$ and $\Omega(T_{max})$.
\end{proof}
}

\ignore{
\remark{Based on \cite{EASorting2004} page 355f, the bound may be improved to $\Omega(n T_{max} \log n)$ (or similar). But several restrictions have to be made depending, on the fitness function as the maximum increase in fitness has to be bounded by a constant. 
For example, for $HAM(P)$ it is possible to increase the fitness by $n$ with the correct \texttt{deletion/insertion}. Similarly, $LAS(P)$ with the correct \texttt{substitution} in the middle of $P$ yields an improvement by $n/2$. Similarly, $INV(P)$ with the correct \texttt{insertion} at the beginning yields an improvement by $-n$. Only $RUN(P)$ and $EXC(P)$ have an at most constant improvement. \\
@FRANK: include this? this may give us some $\Theta$ later, but is just a tiny contribution with $RUN$ and $EXC$. Alternatively: just make this a note in the text.}

}

\section{Upper Bound}
\label{sec:upper}



In this section we analyze the performance of our GP variants on one of the fitness functions introduced in Section~\ref{sec:definitions}. 

We exploit a similarity between our variants and evolutionary algorithms to obtain an upper bound. 
We use the method of fitness-based partitions, also called fitness-level method, to estimate the expected optimization time. This method has originally been introduced for the analysis of elitist evolutionary algorithms (see, \eg, Wegener~\cite{Wegener2002}) where the fitness of the current search point can never decrease. The idea is to partition the search space into levels $A_1, \dots, A_m$ that are ordered with respect to fitness values. Formally, we require that for all $1 \le i \le m-1$ all search points in $A_i$ have a strictly lower fitness than all search points in $A_{i+1}$. In addition, $A_m$ must contain all global optima.

Now, if $s_i$ is (a lower bound on) the probability of discovering a new search point in $A_{i+1} \cup \dots \cup A_m$, given that the current best solution is in $A_i$, the expected optimization time is bounded by $\sum_{i=1}^{m-1} 1/s_i$, as $1/s_i$ is (an upper bound on) the expected time until fitness level~$i$ is left and each fitness level has to be left at most once.

Although the used \HVLs operator is complex, we can obtain a lower bound on the probability of making an improvement considering fitness improvements that arise from the \HVLs sub-operations insertion and substitution. In combination with fitness levels defined individually for the used sortedness measures, this gives us the runtime bounds in this section. 

Let us denote by $T_{max}$ the maximal tree size at any stage during the evolution of the algorithm, and by $T_k$ the tree size when the fitness $k$ is achieved during a run.

\begin{theorem}\label{thm:INVupperBound}
The expected optimization time of the single- and multi-operation cases of \oneonegps with $INV$ is $O(n^3T_{max} )$.
\end{theorem}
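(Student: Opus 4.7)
The plan is to apply the fitness-based partition method introduced at the start of this section, using the value of $INV(P)$ to define the levels. Since $INV(P) \in \{0,1,\ldots,\binom{n}{2}\}$ and the unique optimum attains $\binom{n}{2}$, there are $O(n^2)$ levels, so it suffices to lower-bound the probability of leaving each non-optimal level by $\Omega(1/(n T_{max}))$.

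The central combinatorial step is to exhibit, for any non-optimal tree $X$ with derived permutation $P$, a single \HVLs sub-operation that strictly increases $INV(P)$. First, if $P$ is incomplete, pick any missing element $e\in L\setminus P$. Because $|P|\ge 1$, at least one of the counts $a=|\{x\in P:x<e\}|$ or $b=|\{x\in P:x>e\}|$ is strictly positive; inserting $e$ so that it ends up at the tail of $P'$ contributes $\Delta INV=a$, while inserting so that it ends up at the head contributes $\Delta INV=b$, and at least one of these is $\ge 1$. Second, if $P$ is complete but not the identity, there exists an adjacent inversion, i.e.\ an index $i$ with $p_i>p_{i+1}$. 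Inserting a duplicate leaf labeled $p_{i+1}$ at the position in $l$ immediately before the first occurrence of $p_i$ moves the first occurrence of $p_{i+1}$ forward, and a short check using the first-occurrence convention shows that the resulting $P'$ differs from $P$ only by a transposition of positions $i$ and $i+1$, which increases $INV$ by exactly one.

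Next, I would lower-bound the probability of the targeted insertion. Choosing the insertion sub-operation contributes a factor $1/3$; selecting the correct tree node $v$ among the $O(T_{max})$ available nodes contributes $\Omega(1/T_{max})$; drawing the required label from $L$ contributes $1/n$; and placing the new leaf on the correct side of the join contributes $1/2$. The product is $\Omega(1/(nT_{max}))$. For \oneonegpsmulti the event $\{k=1\}$ occurs with probability $1/e=\Omega(1)$, so the same asymptotic lower bound on the per-iteration improvement probability carries over.

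Summing reciprocals over the $O(n^2)$ non-optimal levels via the fitness-level inequality yields
\[
E[T]\le \sum_{k=0}^{\binom{n}{2}-1}\frac{1}{s_k}=O(n^2)\cdot O(nT_{max})=O(n^3 T_{max}),
\]
which is the claimed bound. The principal obstacle is the complete-but-non-identity case, where one must carefully track how a duplicate insertion reshapes $P$ under the first-occurrence convention and verify that the targeted $l$-position can always be reached via a tree-node choice of probability $\Omega(1/T_{max})$, irrespective of how the duplicates are currently distributed across $X$.
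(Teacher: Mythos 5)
Your proof is correct and follows essentially the same route as the paper: fitness levels indexed by the $O(n^2)$ possible $INV$ values, a targeted insertion succeeding with probability $\Omega(1/(nT_{max}))$ per iteration (with the extra constant factor $1/e$ for the multi-operation case), and the fitness-level sum giving $O(n^3 T_{max})$. Your explicit case analysis (incomplete $P$ versus complete but non-identity $P$) establishing that an improving single insertion always exists spells out a point the paper only asserts via a worst-case example, so it is a welcome refinement rather than a different approach.
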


\begin{proof}

The proof is an application of the above-described fitness-based partitions method. 
Based on the observation that $n\cdot(n-1)/2+1$ different fitness values are possible, 
we define the fitness levels $A_0, \ldots, A_{n\cdot(n-1)/2}$ with 
\[
A_i= \left\{ \pi \left|  INV(\pi)=i  \right.\right\} .
\]
As there are at most $n\cdot(n-1)/2$ advancing steps between fitness levels to be made, the total runtime is bounded by the sum over all times needed to make such steps. 

We bound the times by investigating the case, when only a particular insertion of a specific leaf at its correct position achieves an increase of the fitness.\footnote{Examplarily, the tree with $l=( \mbox{$n$}\_\_\mbox{$n$}, $1$, $2$, \ldots, \mbox{$n$-1})$ can only be improved (in a single step) by inserting a leaf labelled $1$ at the leftmost position.} 
The probability for such an improvement for \oneonegpssingle is $p_k=\Omega\left(\frac{1}{n T_k}\right)$. For \oneonegpsmulti, the probability for a single mutation operation occurring (including the mandatory one) is $1/e$; thus $p_k=\Omega\left(\frac{1}{n T_k}\right)$ in the multi-operation case as well.

Therefore, 
the total optimization time is 
\begin{eqnarray*}
\sum_{k=0}^{n\cdot(n-1)/2} O\left(nT_{max}\right)  
=O(n^3T_{max} ).
\end{eqnarray*}
\end{proof}

%

\ignore{In case the tree represents a valid permutation, $T_k \geq n$ holds, and $k$ terminals appear at the correct positions. Then it is possible to achieve an improvement by picking one of the currently incorrect terminals and substituting it by one of the remaining $n-k$ terminals, which leads to $p_k=\Omega \left( \frac{n-k}{n T_k} \right)$. 
\remark{What if deletion is allowed? Conjecture: ``deletion does not improve the situation asymptotically''}}

\ignore{
For our second upper bound, this time for the case that the sortedness measure $INV$ is used, we analyze the runtime depending on an $INV$-variant. Let $INV'(\pi)$ measure the maximum length of the longest list of ascending elements. This definition is similar to that of $LAS$, but it does not require the ascending elements to be immediate neighbors in list of expressed elements. $INV'$ has the important property that whenever it's value increases, the value of the original $INV$ increases as well. 

\begin{theorem}\label{thm:INVupperBound}
The expected optimization time of the single- and multi-operation cases of \oneonegps with $INV$ is $O(T_{max} n \log n)$.
\end{theorem}

\begin{proof}
Similar to the proof of Theorem~\ref{thm:HAMupperBound}, we apply the fitness level method, this time with the fitness levels 
\[
A_i= \left\{ \pi \left|  INV'(\pi)=i  \right.\right\} .
\]

With $INV'$, 
if the current fitness is $k<n$, then it is possible to leave the current fitness level by picking one of the $n-k$ terminals that are missing in the list (defining the current fitness value), and inserting it at the correct position, or substituting a non-contributing terminal by it. This leads to $p_k=\Omega \left( \frac{n-k}{n T_k} \right)$ for both the single- and multi-operation GP-variants. 





Again, the total runtime can be bounded from above, based on the lower bound on the probability of leaving the current fitness level: 
\begin{eqnarray*}
\sum_{k=0}^{n-1} O \left({\frac{nT_{max}}{n-k}} \right)
& = & O(T_{max} n) \cdot \sum_{k=0}^{n-1}{\frac{1}{n-k}}\\
& = & O(T_{max}n \log n). 
\end{eqnarray*}
\end{proof}

When we compare Theorems~\ref{thm:HAMupperBound} and Theorem~\ref{thm:INVupperBound}, we see that when we allow the permutation to be slowly built up across the tree by using $INV$, the runtime is not as dependent on the initial tree as it is when $HAM$ is used.}




\renewcommand{\arraystretch}{1.5}
\begin{table*}
	\centering
		\begin{tabular}{c|c|c}
\multirow{2}{*}{\renewcommand\arraystretch{0.95}\begin{tabular}[b]{@{}l@{}}Fitness\\function\end{tabular}} & \multicolumn{2}{c}{\oneonegps} \\ 
 & single & multi \\ \hline
INV & $O(n^3T_{max} )$ & $O(n^3T_{max} )$\\
HAM & $\infty$ & $\Omega \left( \left( \frac{n}{e} \right)^{n}  \right)$ \\
RUN & $\infty$ & $\Omega \left( \left( \frac{n}{e} \right)^{n}  \right)$ \\
LAS & $\infty$ & $\Omega \left( \left( \frac{n}{e} \right)^{n}  \right)$ \\
EXC & $\infty$ & $\Omega \left( \left( \frac{n}{e} \right)^{n}  \right)$ \\
\end{tabular}
\caption{Summary of results.
Note that unless another lower bound is given, the lower bounds of $\Omega(n^2)$ and $\Omega(n)$ from Section~\ref{sec:lowerBounds} hold. 
}
\label{tab:results}
\end{table*}

\section{Worst Case Situations}
\label{sec:worst}

In the following, we examine our algorithms for the remaining measures of sortedness. 
We present several worst case examples for $HAM$, $RUN$, $LAS$, and $EXC$ that demonstrate that \oneonegpssingle and \oneonegpsmulti can get stuck during the optimization process. 
This shows that evolving our GP system is much harder than working with the permutation-based EA presented in where only the sortedness measure $RUN$ leads to an exponential optimization time.

We restrict ourselves to the case where we initialize with a tree of size linear in $n$ and show that even this leads to difficulties for the mentioned sortedness measures. Note, that a linear size is necessary to represent a complete permutation of the given input elements. 

For $RUN$ and $LAS$, we investigate the following initial solution called $T_{w1}$ and show that it is hard for our algorithms to achieve an improvement.

$$ \underbrace{n, n, \ldots, n}_{n+1 \text{ of these}}, 1, 2, 3, \ldots, n $$

\begin{theorem}\label{thm:lowerBoundWorstCaseRUNLASsingle}
Let $T_{w1}$ be the initial solution to \SORTING. Then the expected optimization time of \oneonegpssingle and  \oneonegpsmulti is infinite respectively $e^{\Omega(n)}$ for the sortedness measures $RUN$ and $LAS$.
\end{theorem}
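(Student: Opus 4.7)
The plan is to show that $T_{w1}$ is a local optimum that requires many simultaneous HVL-mutate' sub-operations to escape. I first compute that the in-order leaf list of $T_{w1}$ is $l = (n, n, \ldots, n, 1, 2, \ldots, n)$, which expresses the (complete) permutation $P = (n, 1, 2, \ldots, n-1)$. Thus $LAS(P) = n-1$, witnessed by $1 < 2 < \cdots < n-1$, and $RUN(P) = 2$, corresponding to the runs $(n)$ and $(1, 2, \ldots, n-1)$. Since any $P'$ with $|P'| \leq n$ satisfying $LAS(P') = n$ must equal the identity $P_{\opt} = (1, 2, \ldots, n)$, and the same holds for any complete $P'$ with $RUN(P') = 1$ (incomplete lists carry the penalty $n + 1 > 2$), every accepted offspring of $T_{w1}$ must express $P_{\opt}$.

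The core combinatorial claim is that transforming $T_{w1}$ into a tree $T'$ with $P(T') = P_{\opt}$ requires at least $n - 1$ HVL-mutate' sub-operations. I plan to prove this by using that HVL-mutate' preserves the relative in-order ordering of the leaves that are not deleted. Call the $n+1$ leaves at original in-order positions $1, \ldots, n+1$ the front leaves (each labelled $n$ in $T_{w1}$). If every front leaf is either substituted or deleted, then at least $n + 1 \geq n - 1$ operations have been used. Otherwise, let $s_1 \in \{1, \ldots, n+1\}$ be the original index of the leftmost front leaf that still carries the label $n$ in $T'$. Because $s_1$ is leftmost, each of the $s_1 - 1$ originals at positions $< s_1$ must have been either substituted to a non-$n$ value or deleted, contributing $s_1 - 1$ operations. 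Moreover, in the new leaf list $l'$ of $T'$, the portion to the left of the leaf originally at position $s_1$ must contain the first occurrences of all $n-1$ values in $\{1, \ldots, n-1\}$ (otherwise some of them first-occurs after an $n$, contradicting $P(T') = P_{\opt}$). These values can come only from the $B_1 \leq s_1 - 1$ substitutions at positions $< s_1$ and from newly inserted leaves that end up to the left of position $s_1$ in $l'$; therefore at least $n - 1 - B_1$ insertions are required. Adding these contributions gives at least $(s_1 - 1) + (n - 1 - B_1) \geq n - 1$ sub-operations.

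The two parts of the theorem then follow quickly. For \oneonegpssingle only $k = 1$ sub-operation is applied per iteration, but escaping $T_{w1}$ needs at least $n - 1$, which exceeds $1$ for $n \geq 3$. Hence every offspring is rejected and the expected optimization time is infinite. For \oneonegpsmulti the number of sub-operations per iteration is $k = 1 + \text{Pois}(1)$, and a standard Poisson tail estimate gives
\[
\Prob(k \geq n - 1) = \sum_{m = n - 2}^{\infty} \frac{e^{-1}}{m!} \leq \frac{2 e^{-1}}{(n-2)!} = e^{-\Omega(n \log n)}.
\]
Since this upper-bounds the probability of leaving $T_{w1}$ in one iteration, the expected number of iterations before any acceptance, and hence the expected optimization time, is at least $e^{\Omega(n \log n)} = e^{\Omega(n)}$.

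The main technical obstacle is the combinatorial lower bound in the second paragraph. Insertions can place leaves at essentially arbitrary positions of the in-order traversal (including to the left of the original root), and sub-operations can partially undo each other (an $n$-leaf can be substituted to another value and then back to $n$), so the bookkeeping must be performed at the level of the final tree $T'$ rather than of the sequence of operations. I expect the counting sketched above to handle all cases, but care is needed with degenerate situations such as the case when every front leaf is touched and the case $s_1 = 1$, both of which are easily checked and do not lower the bound below $n - 1$.
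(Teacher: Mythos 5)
Your proof is correct, and at the top level it follows the same route as the paper: argue that from $T_{w1}$ the only accepted offspring is the fully sorted expressed permutation, that producing it requires $\Omega(n)$ \HVLs sub-operations within a single mutation step (hence \oneonegpssingle never accepts anything and has infinite expected time), and then bound the probability that $1+\mathrm{Pois}(1)$ reaches that many sub-operations to get the $e^{\Omega(n)}$ bound for \oneonegpsmulti. The genuine difference lies in the key combinatorial step. The paper simply asserts that an improvement requires deleting all $n+1$ leading $n$-leaves at once, which is not literally accurate: substituting $n-1$ of the front leaves by $1,\ldots,n-1$ in order (leaving some $n$'s in place) also yields the identity as expressed permutation, so deletion is not the only escape route and the ``at least $n+1$ operations'' count does not cover all cases. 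Your lemma --- tracking the leftmost surviving front leaf still labelled $n$, using that \HVLs preserves the relative in-order position of surviving leaves, and charging substitutions, deletions and insertions separately to obtain the lower bound of $n-1$ sub-operations --- closes exactly this gap and handles the substitution/insertion escape routes the paper ignores; it even gives the slightly stronger tail $e^{-\Omega(n\log n)}$ per step. So your argument is the same strategy made rigorous, at the cost of the more involved bookkeeping over the final tree $T'$; the paper's version is shorter but relies on an overstated claim about which mutations can improve the fitness.
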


\begin{proof}
We consider \oneonegpssingle first.
It is clear that, with a single\linebreak\HVLs application, only one of the leftmost $n$s can be removed. For an improvement in the sortedness based on $RUN$ or $LAS$, all leftmost $n+1$ leaves have to be removed at once. This cannot be done by the \oneonegpssingle, resulting in an infinite runtime.

\oneonegpsmulti can only improve the fitness is by removing the leftmost $n+1$ leaves. 
Hence, in order to successfully improve the fitness, at least $n+1$ sub-operations have to be performed, assuming that we, in each case, delete one of the leftmost $n$s. Because the number of sub-operations per mutation is distributed as $1+Pois(1)$, the Poisson random variable has to take a value of at least $n$. 
This implies that the probability for such a step is $e^{-\Omega(n)}$ and the expected waiting time for such a step is therefore $e^{\Omega(n)}$ which completes the proof.
\end{proof}

Similarly, we consider the tree $T_{w2}$ which has as leaves the elements

$$ \underbrace{n, n, \ldots, n}_{n+1 \text{ of these}}, 2, 3, \ldots, n-1, 1, n $$

and show that this is hard to improve when using the sortedness measures $HAM$ and $EXC$.

\begin{theorem}\label{thm:lowerBoundWorstCaseEXCsingle}
Let $T_{w2}$ be the initial solution to \SORTING. Then the expected optimization time of \oneonegpssingle and  \oneonegpsmulti is infinite respectively $e^{\Omega(n)}$ for the sortedness measures $HAM$ and $EXC$.
\end{theorem}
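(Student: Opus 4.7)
The plan is to mirror the proof of Theorem~\ref{thm:lowerBoundWorstCaseRUNLASsingle}, but now the bottleneck lives in a different part of the tree. First I compute the initial measures directly: the in-order parse of $T_{w2}$ expresses the leading $n$ exactly once and then $2,3,\ldots,n-1,1$, giving $P=(n,2,3,\ldots,n-1,1)$. Hence $HAM(P)=n-2$ (positions $2,\ldots,n-1$ are fixed) and $P$ consists of the $2$-cycle $(1\;n)$ together with $n-2$ fixed points, so $EXC(P)=1$.

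Next I characterise every $P'$ that would strictly improve the fitness. For $HAM$ I need $HAM(P')\geq n-1$; writing $k=|P'|\leq n$, either $k=n$ and then $n-1$ fixed points in a permutation of $\{1,\ldots,n\}$ force $P'=(1,2,\ldots,n)$, or $k=n-1$ and then $P'=(1,2,\ldots,n-1)$ is the unique incomplete permutation that reaches $HAM=n-1$; any $k<n-1$ is ruled out because $HAM(P')\leq k$. For $EXC$ the only improvement is $EXC=0$, i.e.\ $P'=(1,2,\ldots,n)$ (since incomplete permutations get the prohibitive value $n+1$). In particular, in all improving cases the first expressed element, which equals the leftmost leaf in $l$, must be $1$.

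The core observation is then structural: in $T_{w2}$ the first $n+1$ leaves are all $n$'s, and any HVL' sub-operation touches at most one leaf. A single substitution, deletion or insertion therefore leaves at least $n$ leading $n$'s in place, so the first expression remains $n$; the only way to make $P'$ start with $1$ is to insert a leaf labelled $1$ in front of the block, and then the expression $(1,n,2,3,\ldots,n-1)$ has $HAM=1$ and $EXC=n-2$ — certainly not improvements. I will carry this case analysis through all three sub-operation types and over all positions, concluding that no single HVL' application can produce a $P'$ in the target set $\{(1,\ldots,n),(1,\ldots,n-1)\}$. Therefore \oneonegpssingle rejects every mutation and remains at $T_{w2}$ forever, giving infinite expected optimisation time.

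For \oneonegpsmulti the same structural argument shows that any improving step must, within a single generation, either delete/substitute each of the $n+1$ leading $n$'s, or insert at least $n-1$ leaves $1,2,\ldots,n-1$ in order in front of them (possibly together with further edits to fix the trailing $1$ and $n$). Either way at least $n$ HVL' sub-operations are required. Since the number of sub-operations per generation is $1+\mathrm{Pois}(1)$, the probability that a generation performs $\geq n$ of them is $\Pr[\mathrm{Pois}(1)\geq n-1]=e^{-\Omega(n)}$, so the expected waiting time for an improving step, and therefore the expected optimisation time, is $e^{\Omega(n)}$. The main obstacle is the combinatorial lemma "any improvement requires at least $n$ leaf edits"; once that is nailed down, the rest is identical in spirit to the proof of Theorem~\ref{thm:lowerBoundWorstCaseRUNLASsingle}.
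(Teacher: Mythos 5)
Your proposal is correct and follows essentially the same route as the paper: no single \HVLs sub-operation can turn the expressed permutation $(n,2,\ldots,n-1,1)$ of $T_{w2}$ into one of the few strictly better permutations (all of which must begin $1,2,\ldots$), so \oneonegpssingle stalls forever, while any improving step needs $\Omega(n)$ sub-operations, giving the Poisson-tail bound $e^{\Omega(n)}$ for \oneonegpsmulti. Your count ``at least $n$'' (like the paper's ``$n+1$'') slightly overstates the true minimum---already $n-1$ insertions of $1,\ldots,n-1$ in front of the block yield the identity expression---but this changes nothing asymptotically, since $\Pr[\mathrm{Pois}(1)\ge n-2]=e^{-\Omega(n)}$.
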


\begin{proof}
We use similar ideas as in the previous proof.
Again, it is not possible for \oneonegpssingle to improve the fitness in a single step, as all $n+1$ leftmost leaves have to be removed in order for the rightmost $n$ to become expressed. Additionally, a leaf labeled $1$ has to be inserted at the beginning, or alternatively, one of the $n+1$ leaves labeled $n$ has to be replaced by a $1$. 
This results in a minimum number of $n+1$ sub-operations that have to be performed by a single \HVLs application, leading to the lower bound of $e^{\Omega(n)}$ for \oneonegpsmulti.
\end{proof}


\section{Conclusions}\label{sec:conclusions}

Genetic programming is successfully applied in numerous fields. However, its computational complexity analysis has just been started recently. Thus far, only problems with independent problem semantics have been analyzed. We investigated a first problem with dependent semantics, namely the sorting problem.
Analyzing the set up of of Durrett et al~\cite{GPOrderMajority2011} together with the fitness measures proposed by Scharnow et al. \cite{EASorting2004}, we have shown how the algorithms behave on different measures on sortedness.

Our results are summarized in Table~\ref{tab:results}.
For the measure $INV$ we have presented polynomial bounds on the expected optimization time. For the remaining measurements $HAM$, $RUN$, $LAS$, and $EXC$, we have pointed out situations where the algorithms get stuck. 
Our analyses give further rigorous insights into the behavior of simple GP systems. Furthermore, it shows the fact that if multiple occurrences of variables are allowed in the system, this may make the optimization task hard much harder than for permutation-based evolutionary algorithms, where only single occurrences are allowed.

\ignore{
\remark{Differences to Scharnow-results: Scharnow uses complete permutations and  the operators \texttt{exchange} and \texttt{jump}, which both can have a larger influence on the resulting permutation than \HVLs.}

In the future, we plan to extend our work, in order to further push the complexity analysis of genetic programming. 
By using the tuple $\left\langle \mbox{sortedness measure}, T_{max}\right\rangle$ (with the number of leaves as the second component) as fitness values, and then using a lexicographic ordering, this may allow the $1+1$-variants to solve more problems independent of the used sortedness measure.
Furthermore, the extension to the $\mu+1$ variants is possible, and to more problems with dependent problem semantics.
}

\bibliographystyle{abbrv}
\bibliography{references} 

\begin{thebibliography}{10}

\bibitem{EADirected2008}
B.~Doerr and E.~Happ.
\newblock Directed trees: {A} powerful representation for sorting and ordering
  problems.
\newblock In {\em 2008 IEEE World Congress on Computational Intelligence},
  pages 3606--3613. IEEE Computational Intelligence Society, IEEE Press, 2008.

\bibitem{DJWoneone}
S.~Droste, T.~Jansen, and I.~Wegener.
\newblock On the analysis of the (1+1) evolutionary algorithm.
\newblock {\em Theor.\ Comput.\ Sci.}, 276:51--81, 2002.

\bibitem{GPOrderMajority2011}
G.~Durrett, F.~Neumann, and U.-M. O'Reilly.
\newblock Computational complexity analysis of simple genetic programming on
  two problems modeling isolated program semantics.
\newblock In {\em FOGA '11: Proceedings of the 11th ACM SIGEVO workshop on
  Foundations of Genetic Algorithms}. ACM, 2011.
\newblock (to appear).

\bibitem{goldberg:1998:good}
D.~E. Goldberg and U.-M. O'Reilly.
\newblock Where does the good stuff go, and why? how contextual semantics
  influences program structure in simple genetic programming.
\newblock In W.~Banzhaf, R.~Poli, M.~Schoenauer, and T.~C. Fogarty, editors,
  {\em EuroGP}, volume 1391 of {\em Lecture Notes in Computer Science}, pages
  16--36. Springer, 1998.

\bibitem{koza:book}
J.~R. Koza.
\newblock {\em Genetic Programming: On the Programming of Computers by Means of
  Natural Selection}.
\newblock MIT Press, Cambridge, MA, USA, 1992.

\bibitem{MotwaniRaghavan}
R.~Motwani and P.~Raghavan.
\newblock {\em Randomized Algorithms}.
\newblock Cambridge University Press, 1995.

\bibitem{BookNeuWit}
F.~Neumann and C.~Witt.
\newblock {\em Bioinspired Computation in Combinatorial Optimization --
  Algorithms and Their Computational Complexity}.
\newblock Springer, 2010.

\bibitem{OReilly:thesis}
U.-M. O'Reilly.
\newblock {\em An Analysis of Genetic Programming}.
\newblock PhD thesis, Carleton University, Ottawa-Carleton Institute for
  Computer Science, Ottawa, Ontario, Canada, 22 Sept. 1995.

\bibitem{poli08:fieldguide}
R.~Poli, W.~B. Langdon, and N.~F. McPhee.
\newblock {\em A Field Guide to Genetic Programming}.
\newblock lulu.com, 2008.

\bibitem{PoliVLM10}
R.~Poli, L.~Vanneschi, W.~B. Langdon, and N.~F. McPhee.
\newblock Theoretical results in genetic programming: the next ten years?
\newblock {\em Genetic Programming and Evolvable Machines}, 11(3-4):285--320,
  2010.

\bibitem{RudBook}
G.~Rudolph.
\newblock {\em Convergence properties of evolutionary algorithms}.
\newblock Hamburg: Kova¡c, 1997.

\bibitem{EASorting2004}
J.~Scharnow, K.~Tinnefeld, and I.~Wegener.
\newblock The analysis of evolutionary algorithms on sorting and shortest paths
  problems.
\newblock {\em Journal of Mathematical Modelling and Algorithms}, 3:349--366,
  2004.

\bibitem{Wegener2002}
I.~Wegener.
\newblock Methods for the analysis of evolutionary algorithms on
  pseudo-{Boolean} functions.
\newblock In R.~Sarker, X.~Yao, and M.~Mohammadian, editors, {\em Evolutionary
  Optimization}, pages 349\protect\nobreakdash--369. Kluwer, 2002.

\end{thebibliography}
\end{document}